\documentclass{article}

\usepackage{microtype}
\usepackage{graphicx}
\usepackage{subcaption}
\usepackage{booktabs} 
\usepackage{multirow}

\usepackage{hyperref}

\usepackage[preprint]{icml2026}

\usepackage{amsmath}
\usepackage{amssymb}
\usepackage{mathtools}
\usepackage{amsthm}
\usepackage{adjustbox}

\usepackage[capitalize,noabbrev]{cleveref}

\theoremstyle{plain}
\newtheorem{theorem}{Theorem}[section]

\theoremstyle{definition}

\theoremstyle{remark}

\usepackage{todonotes}

\usepackage[inline]{enumitem}
\usepackage{tabularx}
\usepackage{dsfont}
\usepackage{svg}
\usepackage{import}    
\usepackage{xcolor}    

\DeclareMathOperator{\DLGN}{\text{DLGN}}
\DeclareMathOperator{\LLNN}{\text{LLNN}}
\DeclareMathOperator{\LIGHT}{\text{LIGHT}}
\DeclareMathOperator{\WARP}{\text{WARP}}
\DeclareMathOperator{\DWN}{\text{DWN}}
\DeclareMathOperator{\LNN}{\text{LNN}}

\DeclareMathOperator*{\argmin}{\mathrm{arg\,min}}
\DeclareMathOperator*{\sgn}{\mathrm{sgn}}

\renewcommand{\vec}[1]{\boldsymbol{#1}}
\newcommand{\mat}[1]{\boldsymbol{#1}}
\newcommand{\basis}[2]{\phi_{#2}^{#1}}
\newcommand{\lut}[1]{t^{(#1)}}
\newcommand{\veclut}[1]{\vec{t}^{(#1)}}

\newcommand{\btp}{\varphi}  

\crefname{equation}{Eq.}{Eqs.}
\crefname{section}{Sec.}{Secs.}
\crefname{figure}{Fig.}{Figs.}
\crefname{table}{Tab.}{Tabs.}
\crefname{theorem}{Theorem}{Theorems}

\icmltitlerunning{WARP Logic Neural Networks}

\begin{document}
\twocolumn[
  \icmltitle{WARP Logic Neural Networks}



  \icmlsetsymbol{equal}{*}

  \begin{icmlauthorlist}
    \icmlauthor{Lino Gerlach}{equal,prin}
    \icmlauthor{Thore Gerlach}{equal,esa}
    \icmlauthor{Liv Våge}{prin}
    \icmlauthor{Elliott Kauffman}{prin}
    \icmlauthor{Isobel Ojalvo}{prin}
  \end{icmlauthorlist}

  \icmlaffiliation{prin}{Princeton University}
  \icmlaffiliation{esa}{European Space Agency, Advanced Concepts Team}

  \icmlcorrespondingauthor{Lino Gerlach}{lg0508@princeton.edu}
  \icmlcorrespondingauthor{Thore Gerlach}{thore.gerlach@esa.int}

  \icmlkeywords{Fast Machine Learning, Weightless Neural Networks, FPGA, Logic Gates}

  \vskip 0.3in
]



\printAffiliationsAndNotice{\icmlEqualContribution}
\begin{abstract}
Fast and efficient AI inference is increasingly important, and recent models that directly learn low-level logic operations have achieved state-of-the-art performance.
However, existing logic neural networks incur high training costs, introduce redundancy or rely on approximate gradients, which limits scalability.
To overcome these limitations, we introduce \underline{WA}lsh \underline{R}elaxation for \underline{P}robabilistic~(WARP) logic neural networks---a novel gradient-based framework that efficiently learns combinations of hardware-native logic blocks.
We show that WARP yields the most parameter-efficient representation for exactly learning Boolean functions and that several prior approaches arise as restricted special cases.
Training is improved by introducing learnable thresholding and residual initialization, while we bridge the gap between relaxed training and discrete logic inference through stochastic smoothing.
Experiments demonstrate faster convergence than state-of-the-art baselines, while scaling effectively to deeper architectures and logic functions with higher input arity.
\end{abstract}

\section{Introduction}
\label{sec:introduction}



Deep learning~\cite{lecun2015deep} has become the standard for a wide range of tasks in science,
but its success comes with heavy computational costs in both training and inference. This restricts deployability in many real-world settings---particularly in domains where ultra-fast inference is critical, such as healthcare~\cite{bacellar2025nanoml}, particle physics~\cite{Aarrestad_2021}, gravitational wave astronomy~\cite{martins2025improving}, and quantum computing~\cite{bhat2025machine}.
These constraints have motivated substantial research into the development of models that maintain predictive accuracy while improving computational efficiency.
Prominent among such efforts are model compression techniques, which encompass approaches for inducing sparsity~\cite{sung2021training,hoefler2021sparsity}, pruning~\cite{lin2018accelerating,liu2022unreasonable}, and reducing numerical precision via weight quantization~\cite{sun2024gradient,gholami2022survey,chmielneural}.

Nevertheless, such approaches do not directly address the intrinsic computational cost of numerical multiplication.
To overcome this limitation, multiplication-free architectures have been proposed, including binary neural networks~\cite{hubara2016binarized,qin2020binary} and other bit-level summation models~\cite{chen2020addernet,elhoushi2021deepshift,nguyen2024bold}.
However, efficient inference on digital hardware requires mapping abstract computations into executable logic, which incurs significant overhead.
\begin{figure*}[t!]
    \centering
    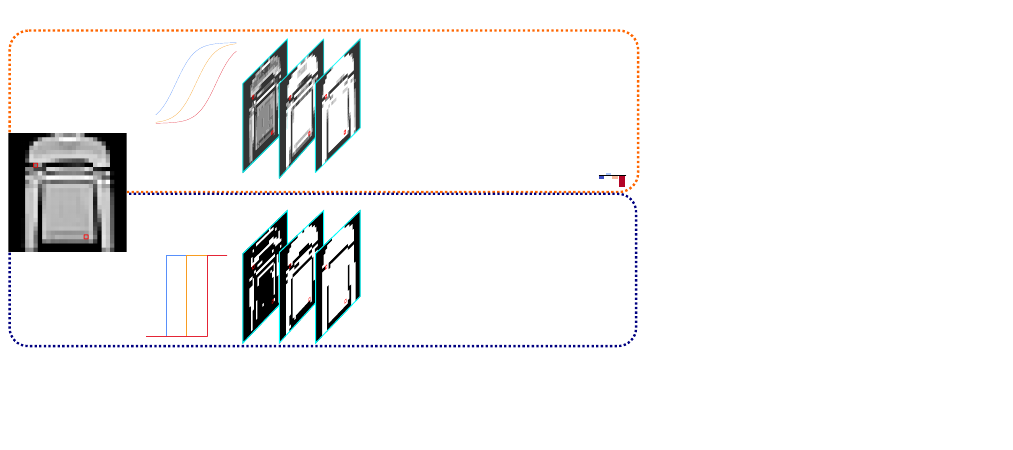
    \caption{Overview of proposed $\WARP$ LNNs enabling differentiable learning of Boolean functions. During training, single neurons are continuously relaxed, while being replaced by a logic function to enable fast and efficient inference. \textbf{(a)} We propose learnable thresholding with relaxing input binarization, which leads to improved performance. \textbf{(b)} For parameterizing neurons, we introduce $\WARP$ which is based on the Walsh-Hadamard transform. This leads to exponentially less parameters ($16$ to $4$ for $n=2$) and deployable logic structures are obtained by optimal thresholding. \textbf{(c)} $\WARP$ is maximally parameter-efficient, does not rely on approximate gradients (exact) and is fully expressive, describing the ability to represent any Boolean function (see~\cref{theo:generality}). While previous methods mostly satisfy only two of these properties (parameter inefficient~\cite{petersen2022deep}, inexact through approximations~\cite{bacellar2024differentiable} or not fully expressive~\cite{hoang2025kanel,andronic2025neuralut}), they can be deduced from $\WARP$ as special cases.}
    \label{fig:overview}
\end{figure*}

Within the domain of multiplication-free models, Logic Neural Networks~($\LNN$s), also denoted as Weightless Neural Networks~(WNNs)~\cite{aleksander2009brief}, represent a distinct class that overcomes this limitation.
Instead of relying on weighted connections, $\LNN$s employ binary look-up tables~(LUTs) or logic gates to drive neural activity during inference, allowing them to capture highly nonlinear behaviors while avoiding arithmetic operations altogether.
Although state-of-the-art models achieve remarkable performance on small-scale tasks, their scalability remains constrained---whether due to limited expressiveness~\cite{susskind2023uleen,andronic2023polylut}, reliance on gradient approximations~\cite{bacellar2024differentiable}, double-exponential growth in parameter requirements~\cite{petersen2022deep,petersen2024convolutional}, or large discretization gaps~\cite{kim2023deep,yousefi2025mind}, 
describing an accuracy mismatch between the training using relaxations and discrete inference.
We overcome these issues by proposing \textbf{\underline{WA}lsh \underline{R}elaxation for \underline{P}robabilistic~($\WARP$)} $\LNN$s, based on differentiable relaxation, similar to Differentiable Logic Gate Networks~($\DLGN$)~\cite{petersen2022deep}.
We propose representing Boolean functions with the Walsh--Hadamard~(WH) transform~\cite{kunz1979equivalence}, providing a compact and differentiable parameterization of a deep $\LNN$ enabling an exponential reduction in the number of parameters compared to $\DLGN$s.
Publishing our code will bridge the gap created by the lack of publicly available convolutional $\DLGN$ implementations for the community.
Our key contributions, depicted in~\cref{fig:overview}, are summarized as follows:
\begin{itemize}[leftmargin=10pt]
    \item \textbf{$\WARP$ Parametrization:}
    We propose $\WARP$-LNNs, utilizing a new differentiable relaxation of higher-input Boolean functions. This does not only increases expressiveness over $\DLGN$s, but also reduces the number of parameters exponentially.
    Through residual initialization and stochastic smoothing, we enable fast convergence, stable gradients and improve discrete inference accuracy.
    \item \textbf{Representation Generality:} We show that $\WARP$ is the most general and efficient parametrization for representing Boolean functions maintaining full expressiveness. We shed light on the interconnections between state-of-the-art models from the literature and show that they are deduced by basis changes, weight restrictions, approximative gradients or non-invertable input mappings. 
    \item \textbf{Learnable Thresholding:} For $\LNN$s, data has to be binarized in inference. While previous methods use data-based binarization, we propose to a differentiable relaxation to enable learning the thresholds during training.
    \item \textbf{Experimental Effectiveness:} We demonstrate versatility across diverse settings: $\WARP$ reliably learns higher-input LUTs in deep architectures where prior SOTA methods struggle, integrates seamlessly with existing approaches to reduce the number of operations in convolutions, and leverages learnable thresholding to substantially lower the required bit precision at inference time.
    
\end{itemize}

\section{Related Work}
\label{sec:related_work}

TreeLUT~\cite{khataei2025treelut} combines gradient-boosted trees with LUT mappings for efficient inference.
For the reminder of this section, however, we will focus on depth-scalable $\LNN$s.

\paragraph{Function Approximation}

Early work on $\LNN$s explored single-layered perceptrons for approximating LUTs, enabling extremely efficient inference~\cite{susskind2022weightless,susskind2023uleen}.
However, an $n$-input LUT has a known VC-dimension of $2^n$~\cite{carneiro2019exact}, while
a single-layered perceptron with $n$ inputs has a VC-dimension of $n + 1$, limiting expressiveness.
Although extensions to higher-degree polynomial basis~\cite{andronic2023polylut} and multi-layer perceptrons~\cite{andronic2024neuralut,andronic2025neuralut,weng2025greater,hoang2025kanel} have been proposed, these models fall short on expressivity and are faced with exponential training complexity. 

\paragraph{Linear Combination}

To address these limitations, $\DLGN$s utilize differentiable relaxations of logic gates, enabling gradient-based training of multi-layer architectures~\cite{petersen2022deep,petersen2024convolutional}.
Advances in connection learning~\cite{mommen2025method,yue2024learning,kresse2025scalable} have improved flexibility, but training $\DLGN$s introduces a discretization gap between training and discrete inference.
Recent work has sought to bridge this gap by stochastic relaxation~\cite{kim2023deep,yousefi2025mind}.

\paragraph{Direct Parametrization}

$\DLGN$s remain restricted to two-input gates and suffer from double-exponential parameter growth, limiting scalability.
To account for higher-degree LUTs, the authors of~\cite{bacellar2024differentiable} propose Differentiable WNNs~(DWNs) to directly parametrize LUTs, leading to a logarithmic parameter reduction compared to $\DLGN$s.
For obtaining differentiability, the addressing function is relaxed to continuous values and gradients are approximated, degrading performance for deep models.

\paragraph{Basis Parametrization}

Recently, alternative parametrization schemes for representing Boolean functions have been developed~\cite{ruttgers2025light,ramirez2025llnn}.
While overcoming the restrictions of double-exponentially many parameters from $\DLGN$s and approximative gradients of DWNs, these methods still suffer from large discretization errors.
This leads to instability in weight initialization and vanishing gradients. 

\section{Background}
\label{sec:background}

\begin{figure*}[t]
    \centering
    \begin{subfigure}{0.31\textwidth}
        \centering
        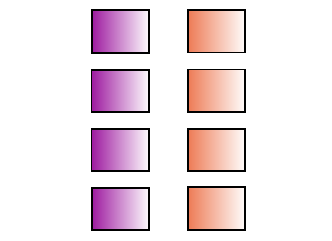
        \caption{$\DLGN$~\cite{petersen2022deep}.}
    \end{subfigure}\hfill
    \begin{subfigure}{0.31\textwidth}
        \centering
        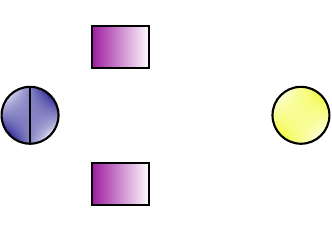
        \caption{$\LLNN$~\cite{ruttgers2025light}.}
    \end{subfigure}\hfill
    \begin{subfigure}{0.31\textwidth}
        \centering
        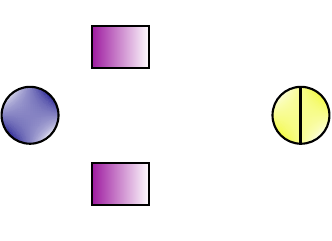
        \caption{$\WARP$ (ours).}
    \end{subfigure}
    \caption{Our parametrization compared to methods from the literature for $n=1$. The $S$-shaped curve indicates the sigmoid function.}
\end{figure*}

Before we propose our reparametrization scheme for Boolean functions in~\cref{sec:meth}, we begin by reviewing fully expressive $\LNN$ approaches from the literature and highlight their limitations.
For the reminder of this paper, we fix the following notation. 
Matrices/vectors are denoted by upper/lower case letters, e.g., $\mat A$/$\vec a$, and we denote the binary representation of integer $i$ with $\vec b^{(i)}\in\mathbb \{0,1\}^n$ starting with the least significant bit, $n=\left\lceil\log_2 i\right\rceil$. 
Further, let $[n]=\{1,\dots,n\}$, $\sigma(\cdot)$ represent the sigmoid function, $\mathds{1}$ the indicator function, and $\veclut{f}\in\{0,1\}^{2^n}$ the LUT of a function $f:\{0,1\}^n\to\{0,1\}$. Finally, let $\btp:\{0,1\}\to \{-1,1\}$ denote the bijection $\btp(x)=1-2x$.

\subsection{Differentiable Logic Gate Networks}
\label{sec:background:dlgn}

Networks composed of logical operations offer a discrete and interpretable framework for computation, but directly optimizing their combinational structure is computationally intractable due to the exponential search space.
To enable efficient gradient-based optimization, prior work introduces Differentiable Logic Gate Networks~($\DLGN$s) by relaxing discrete gate assignments into a continuous, trainable parametrization~\cite{petersen2022deep,petersen2024convolutional}.

\paragraph{Parametrization}

$\DLGN$s find the ideal combination of gates by calculating a weighted sum over all possible gates at each node of the computational graph and calculating a gradient w.r.t. these weights.
The approach involves continuously relaxing onto the simplex spanning all possible Boolean functions:
\begin{align}\label{eq:dlgn_param}
f_{\DLGN}(\vec x)=\sum_{i\in[2^{2^n}]}\alpha_i\basis{\DLGN}{i}(\vec x), \quad \sum_{i\in[2^{2^n}]}\alpha_i=1,
\end{align}
with $\alpha_i\ge 0$ and $\basis{\DLGN}{i}:\mathbb R^n\to[0,1]$ being a probabilistic surrogate of the corresponding Boolean function in form of a multilinear polynomial (see~\cref{tab:binary-gates} for the example $n=2$).
A surrogate formulation is required to accommodate real-valued inputs during training, as the underlying discrete functions are not defined in this domain.
To ensure staying in the probability simplex, a softmax distribution is used for the weights in~\cref{eq:dlgn_param}, i.e., $\alpha_i = \sigma({\alpha'_i}) / \sum_j\sigma({\alpha'_j})$, $\alpha'_i\in\mathbb R$.

\paragraph{Limitations}

Originally developed for two-input logic gates, this approach can be extended to $n$-ary Boolean functions.
However, a fundamental limitation of this approach lies in the double-exponential growth of complexity with increasing neuron arity: a single $6$-input Boolean function, for example, would require $2^{2^6}=1.8447\cdot 10^{19}$ parameters, making the architecture computationally intractable.
Consequently, $\DLGN$s are effectively constrained to learning logic gates with very few inputs, as handling  parameters per gate becomes rapidly infeasible as $n$ increases.

During inference, all softmax operations are replaced by argmax selections, effectively rounding each neuron to the binary gate associated with the largest weight.
The resulting logic gate circuit can be directly embedded in hardware such as FPGAs or ASICs for highly efficient execution.
A key limitation of $\DLGN$s is the misalignment between training and inference. Furthermore, the presence of redundancy introduces ambiguity~\cite{ruttgers2025light}.

\subsection{Differentiable Weightless Neural Networks}
\label{sec:background:dwn}

To overcome the limitation of having double-exponentially many parameters per neuron, the authors of~\cite{bacellar2024differentiable} propose Differentiable WNNs~(DWNs) by
\begin{align}
    f_{\text{DWN}}(\vec x)
    =\sum_{i\in[2^n]}\beta_i\mathds{1}_{\vec b^{(i)}}\left(\btp^{-1}\left(\sgn(\vec x)\right)\right)=\beta_j, \label{eq:dwn}
\end{align}
where $j$ corresponds to the index where the argument of $\btp^{-1}\left(\sgn(\vec x)\right)=\vec b^{(j)}$.
Notably, DWNs have no restrictions on the weights, i.e., $\vec{\beta}\in\mathbb R^{2^n}$, and thus propagate arbitrary real vectors $\vec x\in\mathbb R^n$ through the network.

\paragraph{Limitations}

DWNs have the problem of a non-differentiable parametrization, since the input of the previous layer is discretized to the nearest address for accessing the LUT parameters $\beta_i$.
This issue is approached by using an extended finite difference method which accounts for variations in the addressed position.
However, this is an inaccurate approximation and leads to training instabilities for deep models (see~\cref{sec:experiments}).
For discrete inference, the LUTs are obtained by $\lut{f_{\DWN}}_i=\btp^{-1}\left(\sgn(\beta_i)\right)$.

\subsection{Light Logic Neural Networks}
\label{sec:background:ldlgn}

To overcome non-differentiability, the authors of~\cite{ruttgers2025light} and~\cite{ramirez2025llnn} independently developed similar parametrizations.
While \cite{ramirez2025llnn} denote their approach as LUT $\LNN$s~($\LLNN$s), \cite{ruttgers2025light} uses the term Light $\DLGN$s~($\LIGHT$).
It uses that every $n$-ary Boolean function can be written as
\begin{align}\label{eq:ldlgn_param}
    f_{\LLNN}(\vec x)=\sum_{i\in[2^n]}\gamma_i\basis{\mathds{1}}{i}(\vec x),\quad \gamma_i\in[0,1],
\end{align}
where $\basis{\mathds{1}}{i}$ are indicator polynomials of the form
\begin{align}\label{eq:indicator_base}
    \basis{\mathds{1}}{i}(\vec x)=\prod_{k\in[n]}x_k^{b_k^{(i)}}(1-x_k)^{1-b_k^{(i)}}.
\end{align}
Since $\basis{\mathds{1}}{i}(\vec x)=\mathds{1}_{\vec b^{(i)}}(\vec x)$ for $\vec{x}\in\mathbb \{0,1\}^n$, the Boolean functions are obtained at the corner points of the hypercube $\vec{\gamma}\in\mathbb \{0,1\}^n$.
$\gamma_i\in[0,1]$ is enforced by using the sigmoid function on arbitrary real weights, i.e., $\gamma_i=\sigma(\gamma_i')$, $\gamma_i'\in\mathbb R$.

\paragraph{Limitations}

Even though this reparametrization greatly reduces the number of parameters per neuron, the weight parameters in~\cref{eq:ldlgn_param} are restricted to the interval $[0,1]$.
This leads to undesirable effects in the optimization landscape, resulting in vanishing gradients~\cite{ruttgers2025light}.

After training, $\LLNN$s discretize neurons into LUTs by $\lut{f_{\LLNN}}_i=\btp^{-1}\left(\sgn(\btp(\beta_i))\right).$
Small perturbations in the parameter space---especially in sharp loss landscapes---can cause significant shifts in behavior, as the selected LUT may not reflect the neuron’s actual functional output.
This can lead to a large discretization gap, heavily degrading inference performance after training.

\section{$\WARP$ Logic Neural Networks}
\label{sec:meth}

For the analysis of Boolean functions---such as investigating spectral concentration~\cite{o2014analysis} and learnability~\cite{hellerstein2007pac}---one often considers the symmetrical hypercube $ \{-1,1\}^n$ as domain instead, that is functions of the form $g: \{-1,1\}^n\to \{-1,1\}$.

\subsection{Walsh--Hadamard Transform}
\label{sec:methodology:wh}

A helpful tool for the analysis is the \emph{Fourier expansion}
\begin{align}\label{eq:wh_transform}
    g(\vec s)=\sum_{i\in[2^n]}\theta_i \basis{\text{W}}{i}\left(\vec s\right),\quad \basis{\text{W}}{i}(\vec s)=\prod_{k\in[n]}s_k^{b_k^{(i)}}
    .
\end{align}
The functions $\basis{\text{W}}{i}$ form an orthonormal basis for all functions over $\{-1,1\}^n$.
This is also known as the \emph{Walsh--Hadamard}~(WH) transform and $\theta_i$ can be written as
\begin{align}
    \theta_i=\frac{1}{\sqrt{2^n}}\sum_{\vec s}g(\vec s) \basis{\text{W}}{i}(\vec s)=\frac{1}{\sqrt{2^n}} \sum_{j\in[2^n]}H_{ij}\lut{g}_j,
    \label{eq:walsh_params}
\end{align}
where $\mat H$ denotes the $n$-Hadamard matrix~\cite{hedayat1978hadamard}.
The spectral regularization for DWNs exactly corresponds to $\mathcal L_2$-regularization of the parameters $\vec\theta$.

Intuitively, this expansion uses simple polynomial basis functions---individual variables, pairwise products, and higher-order interactions---to capture the structure of the Boolean function. 
Hence the WH representation provides a compressed and structured parametrization of Boolean functions, where the $2^{2^n}$ Boolean functions are in bijection with a finite subset of the $2^n$-dimensional lattice.

\paragraph{Example}

In the special case of two inputs $\vec s=(u,v)\in\{-1,1\}^2$, every function $g:\{-1,1\}^2\to\{-1,1\}$ admits a decomposition with only four coefficients:
\begin{align*}
g(u,v) = \sgn\left(\theta_1 + \theta_2 u + \theta_3 v + \theta_4 (u \cdot v)\right),
\end{align*}
where \(\theta_1\) encodes the constant bias (tendency toward $+1$ or $-1$), \(\theta_2\) and \(\theta_3\) encode dependence on the individual inputs, and \(\theta_4\) encodes the interaction term between the inputs.
For example, the coefficients $(\theta_1,\theta_2,\theta_3,\theta_4)=(0,0,0,1)$ correspond to the \textsc{XOR} gate, while the \textsc{AND} gate can be expressed as $(\theta_1,\theta_2,\theta_3,\theta_4)=(-\tfrac{1}{2},\tfrac{1}{2},\tfrac{1}{2},\tfrac{1}{2})$ (see~\cref{tab:binary-gates} for the full list of gates and coefficients for $n=2$).

\subsection{Parametrization}
\label{sec:meth:param}

Instead of being restricted to the binary domain $\{-1,1\}^n$, the WH transform in~\cref{eq:wh_transform} decomposes an arbitrary input vector into a superposition of basis functions.
This provides a differentiable relaxation for Boolean functions
\begin{align}\label{eq:wh_param}
    f_{\WARP}(\vec x)=\sigma\left(\frac{1}{\tau}\sum_{i\in[2^n]}\theta_i\basis{\text{W}}{i}(\btp(\vec x))\right).
\end{align}
\begin{theorem}\label{theo:generality}
The $\WARP$ parametrization is the most parameter-efficient representation of any boolean function and previous approaches are special cases that either introduce redundancy ($\DLGN$), errors through approximation ($\DWN$) or restrictions on parameters ($\LLNN$ and $\LIGHT$).
\end{theorem}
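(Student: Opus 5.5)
We make the informal statement precise in three parts and prove each with linear algebra on the $2^n$-dimensional space $\mathcal F_n$ of real functions on $\{0,1\}^n$ (equivalently on $\{-1,1\}^n$ via $\btp$).
\textbf{(i) Expressiveness}: for every Boolean $f$ there is a $\vec\theta$ with $\mathds{1}[f_{\WARP}>\tfrac12]=f$ on $\{0,1\}^n$.
\textbf{(ii) Minimality}: any family of real parametrized functions whose sign/threshold patterns, with the parameter entering linearly before a monotone link, realize all $2^{2^n}$ Boolean functions needs at least $2^n$ parameters.
\textbf{(iii) Reductions}: each of $\DLGN$, $\DWN$, $\LLNN$/$\LIGHT$ is the $\WARP$ form composed with a fixed linear (or, for $\DWN$, piecewise-constant) map plus either a redundancy, a gradient approximation, or a box constraint.

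For (i), set $g=\btp\circ f\circ\btp^{-1}$ and take $\vec\theta$ from \cref{eq:walsh_params}, rescaled so that $\sum_i\theta_i\basis{\text{W}}{i}(\vec s)=g(\vec s)\in\{-1,1\}$ on the cube. This uses that the $\basis{\text{W}}{i}$ form an orthonormal basis and $\mat H$ is invertible. The sign of the pre-activation then equals $g$. Hence $\sigma(\cdot/\tau)$ thresholded at $1/2$ recovers $f$ up to the fixed sign convention of $\btp$, and $f_{\WARP}\to$ the exact LUT as $\tau\to 0$.

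For (ii), the plan is a dimension count. The evaluation map from $\vec\theta\in\mathbb R^{2^n}$ to the pre-activation values on the $2^n$ cube points is $\mat H/\sqrt{2^n}$, a bijection. For a linear-in-parameter model with $m<2^n$ parameters, the image is a subspace $V$ of dimension $<2^n$. Some nonzero $\vec w\perp V$ exists, and the sign pattern $\sgn(\vec w)$ (with zeros set arbitrarily) cannot be attained by any $\vec v\in V$, since $\langle \vec v,\vec w\rangle=0$ would contradict $\sum_j |w_j|\,|v_j|>0$. This is the standard Gordan/Cover-style argument, essentially the VC-dimension fact $2^n$ cited for LUTs. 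So $2^n$ is the minimal count, and $\WARP$ attains it with no redundancy because the $\theta\mapsto$ LUT-preactivation map is injective.

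For (iii), we treat the three approaches in turn.

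For $\LLNN$, expand $\basis{\mathds{1}}{i}$ in the Walsh basis. On the cube, $\basis{\mathds{1}}{i}\circ\btp^{-1}=2^{-n}\sum_j H_{ij}\basis{\text{W}}{j}$, and the same identity holds as multilinear polynomials on all of $[0,1]^n$ since both sides are multilinear. Thus $f_{\LLNN}$ is the multilinear pre-activation with $\vec\theta=2^{-n}\mat H^\top\vec\gamma$ (link the identity), but with $\vec\gamma\in[0,1]^{2^n}$, a box restriction on $\vec\theta$.

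For $\DLGN$, each $\basis{\DLGN}{i}$ is itself a multilinear polynomial, hence $\sum_k c_{ik}\basis{\text{W}}{k}\circ\btp$. So $f_{\DLGN}$ corresponds to $\vec\theta=\mat C^\top\vec\alpha$ with $\mat C\in\mathbb R^{2^{2^n}\times 2^n}$, which has a kernel of dimension $2^{2^n}-2^n$. This is redundancy, restricted to the simplex image.

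For $\DWN$, on the cube $f_{\DWN}$ equals the same linear form with $\vec\beta=\mat H\vec\theta$ up to scaling. Off the cube it composes with $\sgn$, whose derivative vanishes almost everywhere. Any gradient signal therefore comes from the finite-difference surrogate, which is an approximation rather than the exact derivative of the multilinear extension that $\WARP$ uses.

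The main obstacle is (ii): "most parameter-efficient" must be given a class over which minimality holds. Otherwise space-filling or nonlinear parametrizations (a single real parameter indexing all $2^{2^n}$ functions) trivially beat $2^n$. I would restrict to models where the parameters enter linearly before a monotone output link, matching all the compared methods, and state the theorem relative to that class. A secondary subtlety is the ambiguous scaling of $\mat H$ (normalized or not) across \cref{eq:walsh_params}. I would fix $\mat H$ with $\pm1$ entries and carry explicit $2^{-n}$ factors.
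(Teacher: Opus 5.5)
Your proposal is correct. Your reductions in (iii) follow the paper's route. The paper also identifies the change of basis from indicator polynomials to the Walsh basis as the Hadamard matrix and then points to the $[0,1]$ constraint for $\LLNN$. It handles $\DLGN$ through a many-to-one linear projection onto the indicator coefficients; in your version this is the map $\vec\alpha\mapsto\mat C^\top\vec\alpha$, and it is this map, not $\mat C$, that has the $(2^{2^n}-2^n)$-dimensional kernel, with rank $2^n$ because the surrogates include the $2^n$ point indicators. It treats $\DWN$ as the same table read through the non-injective $\sgn$ with approximate gradients.

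The genuine difference is in minimality. The paper's step is a dimension count for exact representation: the $\basis{\text{W}}{i}$ form an orthonormal basis of the $2^n$-dimensional space of functions on the cube, so, implicitly, no linear family with fewer parameters reproduces every $g$. Your Gordan/Cover argument gives a strictly stronger bound. It matches how $\WARP$ actually produces LUTs, namely through the sign of the pre-activation: even realization by sign patterns needs $2^n$ parameters in any parameter-linear class. You also make explicit a scope condition the paper omits; without some restriction of this kind the claim is false, since one real parameter can index all $2^{2^n}$ tables. The paper's version is shorter and ties minimality to the same Hadamard basis change used in the reductions. Yours is correctly scoped and matches the thresholded semantics.

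Two small repairs are needed:
\begin{itemize}
\item \textbf{Ties in the orthant argument.} Realization is $\mathds{1}[v_j>0]$, so some $v_j$ may be zero. Flip $\vec w$ if necessary so it has a positive entry, and target the pattern $\mathds{1}[w_j>0]$. Then $v_jw_j\ge 0$ for all $j$, with strict inequality somewhere, so $\langle\vec v,\vec w\rangle>0$ still holds.
\item \textbf{Sign convention in (i).} Computing $\vec\theta$ from $\btp\circ f\circ\btp^{-1}$ gives pre-activation $1-2f$, which thresholds to the complement of $f$. Use $-\vec\theta$ instead.
\end{itemize}
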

The proof can be found in~\cref{sec:app_generality}.
The decomposition in~\cref{eq:wh_param} provides a compact and differentiable parameterization with $2^n$ parameters, while still allowing to collapse back into one of the $2^{2^n}$ exact Boolean functions at inference time by choosing the closest LUT:
\begin{align}
    \lut{f_{\WARP}}_i=\btp^{-1}\left(\sgn\left(\sum\nolimits_j H_{ij}\theta_j\right)\right).
    \label{eq:discretization}
\end{align}
Exploiting the recursive structure of $\mat H$, the WH transform in~\cref{eq:discretization} is efficiently computable in $\mathcal O(n2^n)$~\cite{fino1976unified}.
Further, this discretization is also optimal.
\begin{theorem}\label{theo:discretization}
Under the $\WARP$ parametrization, the discretization that minimizes the error in any $\mathcal L_p$-norm is achieved by applying \eqref{eq:discretization}, or equivalently by choosing the LUT whose Walsh–Hadamard coefficients are closest to $\vec\theta$.
\end{theorem}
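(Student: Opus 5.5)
The plan is to work entirely in the $\pm1$ value domain and read off the candidate LUT values from $\vec\theta$ through the Hadamard matrix. The $\WARP$ neuron, evaluated at the corner $\vec s=\btp(\vec b^{(i)})$, has pre-activation
\[
z_i=\sum_j\theta_j\basis{\text{W}}{j}(\vec s)=\sum_j H_{ij}\theta_j,
\]
using the identity $\basis{\text{W}}{j}(\btp(\vec b^{(i)}))=(-1)^{\langle \vec b^{(i)},\vec b^{(j)}\rangle}=H_{ij}$ with the (unnormalized) Sylvester ordering. So the relaxed neuron is completely described on the hypercube corners by the vector $\vec z=\mat H\vec\theta\in\mathbb R^{2^n}$. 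A discrete LUT $\veclut{g}$ corresponds to the $\pm1$ vector $\vec y=\btp(\veclut{g})\in\{-1,1\}^{2^n}$; its WH coefficients are $\vec\theta^{g}=\tfrac{1}{2^n}\mat H\vec y$, with normalization chosen consistently with \cref{eq:walsh_params}.

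The first key step is to reduce the optimization to a coordinatewise problem. For any $\mathcal L_p$ error between the real-valued neuron output on the corners and a Boolean function $g$, the error reads
\[
\Big(\sum_i |z_i-y_i|^p\Big)^{1/p},
\]
and the analogous expression holds after applying the monotone map $\sigma(\cdot/\tau)$ or taking $p=\infty$. This is a sum, or maximum, of independent terms, each depending on a single $y_i\in\{-1,1\}$. Each term is minimized by $y_i=\sgn(z_i)$, because $|z_i-\sgn(z_i)|\le |z_i+\sgn(z_i)|$, with equality only when $z_i=0$, where ties are arbitrary. The same holds if we compare $\sigma(z_i/\tau)$ to $\btp^{-1}$-mapped $\{0,1\}$ targets, since $\sigma$ is increasing and symmetric about $1/2$. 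Pulling back through $\btp^{-1}$ gives exactly \cref{eq:discretization}.

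The second step is the equivalence with nearest WH coefficients, and it is where I expect the main obstacle. Because $\mat H\mat H^\top=2^n\mat I$, the map $\vec y\mapsto\mat H\vec y$ is a scaled isometry in $\mathcal L_2$. Consequently
\[
\|\vec\theta-\vec\theta^{g}\|_2\propto\|\mat H\vec\theta-\vec y\|_2,
\]
so choosing $g$ with WH coefficients closest to $\vec\theta$ coincides with the $\mathcal L_2$ corner-value minimizer, which is again $\sgn(\mat H\vec\theta)$. For general $p$, closeness in coefficient space is not isometric, so there I would state the claim via the function-space $\mathcal L_p$ error from the first step and note the coefficient-space equivalence in the Euclidean sense. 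Some care is also needed with the $\tfrac{1}{\sqrt{2^n}}$ normalization versus the plain $\mat H$ in \cref{eq:discretization}. That discrepancy only rescales by a positive constant and so does not affect signs.
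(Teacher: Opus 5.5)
Your argument is essentially the paper's proof. The paper also reduces the $\mathcal L_p$ corner error to independent per-entry problems solved by a sign rule, and it proves the nearest-coefficient equivalence only in the Euclidean norm, via unitarity of $\mat H$; so your restriction of that part to $\mathcal L_2$ is exactly what is actually established. One slip you share with the paper: since $\btp(x)=1-2x$ sends $1\mapsto-1$, the bit nearest to $\sigma(z_i/\tau)$ is $(1+\sgn z_i)/2=\btp^{-1}(-\sgn z_i)$, which is the complement of \cref{eq:discretization} (the paper derives exactly this expression in the sigmoid domain and then equates it with \cref{eq:discretization}). Your claim that the sigmoid version yields ``the same'' LUT therefore needs an explicit convention fix, e.g.\ keeping the error in the $\pm1$ pre-activation domain as in your main step, or reading the relaxed output as $\sigma(-z_i/\tau)$.
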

The proof can be found in~\cref{sec:app_discretization}.
Note that our parametrization in~\cref{eq:wh_param} has three key differences compared to $\LIGHT$ in~\cref{eq:ldlgn_param}:
\begin{enumerate*}[label=(\roman*)]
\item The removal of weight constraints prevents instability during initialization and can exacerbate vanishing gradient phenomena,
\item nonlinearities are introduced explicitly through the sigmoid activation in contrast to $\LIGHT$, where nonlinear dependence on the input is encoded implicitly by polynomial basis expansions, and
\item a continuous, hyperparameter $\tau$ is incorporated, enabling additional flexibility in shaping the representation.
\end{enumerate*}
Morever, we introduce stochastic smoothing and residual initialization for stabilizing continuous training and improving discrete inference.


\paragraph{Stochastic Smoothing}
\label{sec:meth:gap}

Using the discretization in~\cref{eq:discretization}, we face the same issues as for $\LLNN$.
In sharp loss regions, small parameter changes misalign LUT selections, leading to a large discretization gap and poor inference performance.

For mitigating this effect, we note that the sigmoid function characterizes a Bernoulli distribution, representing the probability of an input vector being discretized to a binary output, i.e., $\mathbb P(\lut{f_{\WARP}}_i=1)=f_{\WARP}(\vec b^{(i)})$.
To introduce stochasticity into the training process, we adopt Gumbel–Sigmoid reparameterization, a binary variant of the Gumbel–Softmax (Concrete) distribution, enabling gradient-based training with discrete variables~\cite{maddison2017concrete,jang2017categorical}.
Differentiable samples $z(\vec x)~\sim\text{Ber}(f_{\WARP}(\vec x))$ are obtained by sampling $G_1,G_2\sim\text{Gumbel}(0,1)$
\begin{align}\label{eq:wh_param_gumbel}
    z(\vec x)=\sigma\left(\frac{1}{\tau}\sum_{i\in[2^n]}\theta_i\basis{\text{W}}{i}(\btp(\vec x))+G_1-G_2\right),
\end{align}
with temperature parameter~$\tau$.
Lower values of $\tau$ yield a tighter relaxation towards the discrete domain~$\{0,1\}$, thereby reducing the discretization gap.
However, choosing $\tau$ too small can lead to numerical instabilities such as exploding gradients.
In our experiments, we evaluate the performance of using discretization already during training in the forward pass as a straight-through estimator (see~\cref{sec:experiments}).


\paragraph{Residual Initialization}
\label{sec:initialization}

To avoid vanishing gradients and achieving a reduced discretization gap, the authors of~\cite{petersen2024convolutional} propose to use \emph{residual initialization}~(RI).
Inspired by residual connections~\cite{he2016deep}, a large initial probability is assigned to the pass-through function of a single input variable, e.g., $f(\vec x)=x_n$.
Similarly to $\DLGN$s, this initialization is straight-forward for $\WARP$ for a chosen probability $p$ to the pass-through function.
\begin{align}
    \theta_{2^{n-1}+1}=\tau\sigma^{-1}(p),\ \theta_{i}=0,\ \forall i\neq2^{n-1}+1.
    \label{eq:initialization}
\end{align}
A proof can be found in~\cref{sec:app_initialization} in the appendix.

\subsection{Learnable Thresholding}
\label{sec:learnable_thresholds}

Since we learn a Boolean function for inference, data has to be binarized for compatibility, usually done by thermometer thresholding.
Given a threshold matrix $\mat \Omega\in\mathbb R^{l\times d}$, a data point $\vec x\in\mathbb R^{d}$ is binarized by checking $\omega_{ij}\le x_j$.
State-of-the-art examples include uniform~\cite{petersen2024convolutional} and distributive~\cite{bacellar2024differentiable} thresholding.

Instead of only including data-specific information, we propose to include information on the given task by learning the matrix $\mat \Omega$ simultaneously with the network parameters.
For thermometer thresholding, the row entries have to be monotonically increasing, i.e., $\Omega_{ij}\le \Omega_{ik}$ for $j\le k$.
We follow the approach of learning the first-order differences $\Delta_{ij}$ between subsequent thresholds, by using softplus for smoothly enforcing positivity ($\mathrm{softplus}(\Delta_{ij})=\Omega_{ij+1}-\Omega_{ij}$)
\begin{align*}
    \Omega_{ij}=\sum_{k\le j}\mathrm{softplus}(\Delta_{ij}),\quad \mathrm{softplus}(x)=\log(1+e^x).
\end{align*}
Differentiable thresholding is obtained by using
$\mathds{1}\lbrace\Omega_{ij}\le x_j\rbrace\approx\sigma\left((x_j-\Omega_{ij})/\rho\right)$,
where $\rho\ge 0$ is a temperature parameter controlling smoothness. Similar to~\cref{eq:wh_param_gumbel}, we can introduce Gumbel noise to stabilize the training process, since hard thresholds are computed during inference.

\begin{figure*}[t!]
    \centering
    \includegraphics[width=\textwidth]{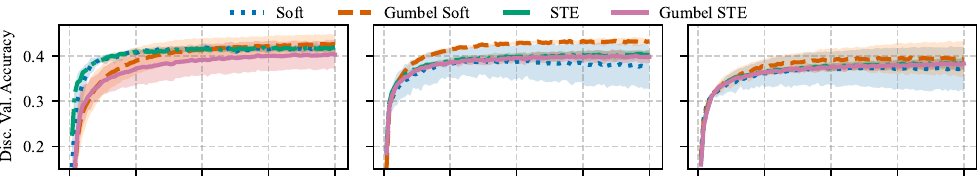}
    \includegraphics[width=\textwidth]{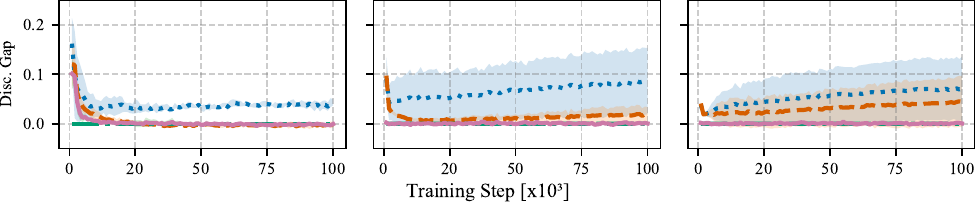}
    \caption{Discrete validation accuracy (top) and discretization gap (bottom) for $\WARP$ on CIFAR-10 comparing different parametrization methods and varying LUT sizes, $n=2$ (left), $n=4$ (middle) and $n=6$ (right).}
    \label{fig:lutrank_gumbel}
\end{figure*}
\section{Experiments}
\label{sec:experiments}

We conduct three sets of experiments. 
First, we highlight the training dynamics of $\WARP$ $\LNN$s and show the improving effect of our proposed residual initialization and stochastic smoothing.
Secondly, we show the effect of learnable thresholding and that it is not restricted to $\WARP$, but is usable for other $\LNN$s benefiting from continuous inputs.
Lastly, the performance of $\WARP$ is assessed in comparison to state-of-the-art methods from the literature.

\subsection{Training Dynamics}
\label{sec:dynamics}

We aim to analyze the training and inference performance of $\WARP$ with using different input-arities for every single neuron.
For this, we evaluate model performance on the CIFAR-10 dataset following standard pre-processing and an 80/20 split in training / validation data.
The architectures of the models used are shown in~\cref{tab:mnist_smallres}.
We adopt the “small” model architecture introduced in the original $\DLGN$ paper~\cite{petersen2022deep} as the baseline and initialize the connections between neurons randomly.
Training is done with residual weight initialization and averaged over 5 different seeds with indicating $95\%$ confidence intervals.

In~\cref{fig:lutrank_gumbel}, we compare LUT sizes of $n=2,4,6$ (from left to right) while varying four different methods for training process. 
Soft refers to the proposed vanilla neuron parametrization in~\cref{eq:wh_param}, while Gumbel Soft introduces stochastic smoothing via~\cref{eq:wh_param_gumbel}.
Further, we also compare corresponding straight-through estimators~(STE) by discretizing the inputs and relaxed parametrizations in the forward pass.
Moreover, to obtain intercomparability between different LUT sizes, we reduce the number of neurons for increasing LUT size proportional to their fraction, that is a reduction of $2\times$ for $n=4$ and $3\times$ for $n=6$.
This leads to the same number of total connections, hinting towards possible deployment advantages for larger LUT sizes on logic-based hardware.
Further hyperparameters can be found in~\cref{sec:app_fmnist}.

The upper plot panel shows the discrete validation accuracy during training, indicating that different LUT sizes show similar performance.
Even though the accuracy is slightly lower for $n=6$, we stress that we do not aim to obtain optimal performance in this experiment.
Rather than fine-tuning the temperature parameter $\tau$, we emphasize that this result already demonstrates that $\WARP$ can be scaled to logic blocks with more than two inputs.

As is evident from the bottom plot panel, stochastic smoothing by injecting Gumbel noise reduces the discretization gap, defined as the difference of the relaxed validation accuracy and the discrete validation accuracy.
While it is constantly 0 for the STEs, Gumbel Soft is able to reduce the gap by smoothing the optimization landscape.
Even though Soft has the best relaxed validation accuracy performance, the discretization gap starts to increase with more training steps, especially for larger LUT sizes.
Similar effects are observed for different datasets, architectures and hyperparameter configurations, as is evident from~\cref{sec:app_fmnist}.

\subsection{Learnable Thresholding}
\label{sec:thresholding}

\begin{figure}[t!]
\includegraphics[width=\linewidth]{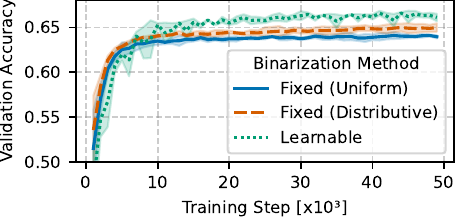}
\caption{Validation accuracy on CIFAR-10 for different binarization methods. The model architecture resembles LogicTreeNet-M, and the $\DLGN$ parametrization according to~\cref{eq:dlgn_param}. Each learning curve consists of three runs with different random seeds.}
\label{fig:cifar-binarization}
\end{figure}

Next, we investigate the effect of learnable thresholding. This is particularly relevant for datasets with heterogeneous feature distributions, where fixed binarization schemes may be suboptimal. To this end, we evaluate different thresholding strategies on the JSC dataset, which exhibits features with varying value ranges and densities.

We adopt the \emph{sm} architecture proposed in~\cite{bacellar2024differentiable}, consisting of two layers with $6$-input LUTs, with learnable connections between the input features and the first layer. We compare three different thresholding schemes:
(i)~fixed uniform thresholds, (ii)~fixed distributive thresholds, and
(iii)~learnable thresholds, which are optimized jointly with the model parameters during training.

The fixed uniform thresholds are placed at equidistant intervals between the minimum and maximum value of each feature. In contrast, distributive thresholding follows the approach of DWN~\cite{bacellar2024differentiable}, where thresholds are chosen according to the empirical feature distributions. Learnable thresholding initializes thresholds in the same way, but allows them to adapt during training via gradient-based optimization.

\cref{fig:jsc-binarization} shows the best validation accuracy on the JSC dataset over $10$ training runs with different random seeds, plotted as a function of the number of bits per feature after binarization. We observe that distributive thresholding already provides a substantial performance improvement over uniform thresholding, which also exhibits the highest variance across runs. Learnable thresholding further improves performance, consistently outperforming distributive thresholding in the low bit-width regime. While distributive thresholding requires approximately $20$ bits per feature to reach peak performance, learnable thresholding achieves a comparable accuracy of roughly $70\%$ with only $5$ bits. Note that~\cite{bacellar2024differentiable} deploys a $200$ bit binarization scheme.

We found the impact of learnable thresholding in the absence of learnable input connections less pronounced, but still significant.
Although learnable thresholding is best suited for datasets with strongly varying feature statistics, we additionally evaluate its effect on image data from CIFAR-10. For this experiment, we employ the LogicTreeNet-M architecture and the corresponding parametrization from $\DLGN$~\cite{petersen2024convolutional}. In the original work, each pixel is encoded as two bits per color channel, using the fixed uniform thresholding. \cref{fig:cifar-binarization} depicts the average training curves over three runs for all three thresholding strategies. Once again, learnable thresholding yields a significant improvement in final accuracy, albeit with slightly reduced training stability compared to fixed schemes.

\begin{figure}[t!]
\includegraphics[width=\linewidth]{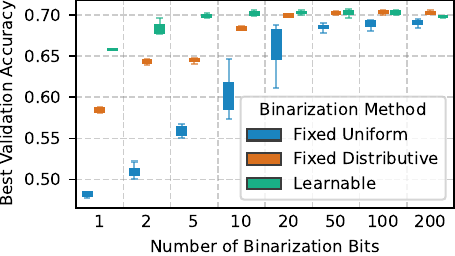}
\caption{Validation accuracy on JSC for different binarization methods vs the number of bits for the binarization of each feature. The models' architectures resemble DWN(n=6, \textit{sm}) with learnable connections in the first layer. Each configuration was run with ten different random seeds.}
\label{fig:jsc-binarization}
\end{figure}

In this setting, the two thresholds are learned globally, i.e., shared across all pixels and channels. We also experimented with channel-wise and pixel-wise learnable thresholds but found the increased hyperparameter sensitivity and reduced training stability to outweigh the potential performance gains.
A complete list of hyperparameters and training settings is provided in~\cref{tab:hyperparameters}.

\subsection{Higher-Rank LUTs in Convolutional Kernels}
\label{sec:highrank_conv}

\begin{figure}[t!]
\includegraphics[width=\linewidth]{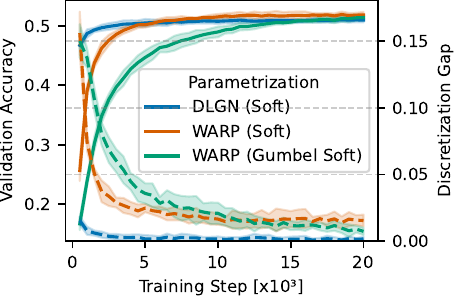}
\caption{Validation accuracy on CIFAR-10 for a small model with a single convolutional layer. The convolution kernels are implemented as binary trees of depth four and four-input trees of depth two for $\DLGN$, and $\WARP$, respectively.}
\label{fig:cifar10-mixed}
\end{figure}
\begin{figure*}[t!]
    \centering
    \includegraphics[width=\textwidth]{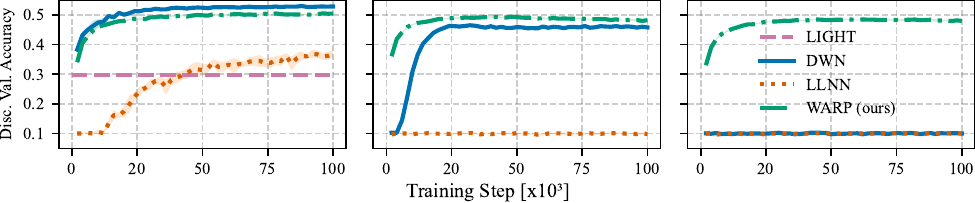}
    \caption{Discrete validation accuracy (top) for $\DWN$~\cite{bacellar2024differentiable}, $\LLNN$~\cite{ramirez2025llnn}, $\LIGHT$~\cite{ruttgers2025light} and $\WARP$ (ours) on CIFAR-10 with a deep architecture comparing varying LUT sizes, $n=2$ (left), $n=4$ (middle) and $n=6$ (right).}
    \label{fig:deep}
\end{figure*}
We investigate the effect of employing higher-rank LUTs within convolutional kernels using $\WARP$.
Recall that in $\DLGN$, convolutional kernels with $2^d$ inputs are implemented as binary trees of depth $d$, which cannot represent all Boolean functions over its $2^d$ inputs.

Using $\WARP$, we instead construct convolutional kernels from higher-input LUTs. Specifically, we implement kernels composed of $4$-input LUTs arranged in a tree of depth $d=2$, resulting in an effective receptive field of $4^2$ inputs. We compare this configuration to a $\DLGN$ kernel with a binary tree depth of $d=4$, which aggregates $2^4$ inputs per kernel.

Despite having the same number of inputs, the two approaches differ in parameter count: A $\DLGN$ kernel with depth $4$ requires $16 \cdot$(8+4+2+1) trainable parameters, whereas the corresponding $\WARP$ kernel requires only $16 \cdot$(4+1) parameters. Moreover, the evaluation of trees is fundamentally sequential and cannot be parallelized, hence the shallower $\WARP$ tree reduces the number of required sequential operations.
As a result, $\WARP$ kernels provide higher expressive power with fewer parameters and improved parallelism.

\cref{fig:cifar10-mixed} compares these two kernel implementations on CIFAR-10. We employ a small model consisting of a single convolutional layer with a skip connection, which combines its output with the previous layer using a relaxed OR operation. All remaining dense connections in the network are implemented using logic gates, such that only the convolutional kernel construction differs between the two models. The architecture is described in more detail in \cref{tab:clgn_cifar10_smallres}.

The results show that $\WARP$ kernels with higher-input LUTs achieve slightly higher final discrete accuracy than the $\DLGN$ baseline. However, this improvement comes at the cost of slower convergence during training, indicating a trade-off between expressivity and optimization difficulty when increasing LUT-arity in convolutional settings. We attribute the slower convergence to the increased discretization gap, which currently prevents this approach from efficiently scaling to deeper architectures such those in \cref{fig:cifar-binarization}.

\subsection{Comparison to $n$-LUT State-of-the-Art}
\label{sec:sota}

We show the advantages of $\WARP$ over other fully expressive $\LNN$s from the literature.
For comparison, we analyze the scalability of $\DWN$s~\cite{bacellar2024differentiable}, $\LLNN$s~\cite{ramirez2025llnn} and $\LIGHT$~\cite{ruttgers2025light} by considering a deep version of the medium-sized model architecture from~\cite{petersen2022deep} for CIFAR-10.
Following~\cite{ruttgers2025light}, we introduce a depth factor $D$ that scales the number of layers in the network.

In~\cref{fig:deep}, we plot the learning curves with respect to the discrete validation accuracy for $n$=2,4,6 (left to right) and $D=3$.
We observe that $\DWN$ still performs well for $n=2$ but its performance strongly degrades for higher LUT ranks. 
This is due to the usage of approximate gradients hindering scalability to larger architectures.
Unfortunately, the code for $\LIGHT$ is not available.
Since results are only reported for $n=2$, we fall back on reporting approximate numbers stated in the paper.
We observe that $\WARP$ strongly outperforms $\LIGHT$, with reaching a higher accuracy.
Comparing to $\LLNN$, we can see that $\WARP$ exhibits better convergence due to the proposed residual initialization for $n=2$, while $\LLNN$ completely fails to learn for $n$=4,6.



\section{Conclusion}



We introduced $\WARP$, a differentiable and parameter-efficient framework for learning logic neural networks based on the Walsh–Hadamard transform.
$\WARP$ provides a fully expressive parametrization of Boolean functions while avoiding the redundancy, approximation errors, and scalability limitations of prior approaches. Through learnable thresholding, residual initialization, and stochastic smoothing, $\WARP$ enables stable training and significantly faster convergence, while narrowing the gap between relaxed training and discrete logic inference.
Our experiments demonstrate that $\WARP$ scales effectively to deeper architectures and higher-arity logic blocks, making it a promising foundation for efficient, hardware-native inference.

While we do not report results on hardware deployment, we expect performance at least on par with existing methods, since WARP learns equivalent logic structures.
In future work, we plan to evaluate WARP on hardware targets to quantify inference latency and resource utilization, and to benchmark performance on larger-scale datasets.



\section*{Impact Statement}

This paper presents work whose goal is to advance the field of Machine Learning.
There are many potential societal consequences of our work, none which we feel must be specifically highlighted here.

\section*{Acknowledgments}
This work was supported by the National Science Foundation under Cooperative Agreement PHY-2323298.


\bibliography{mybib}
\bibliographystyle{icml2026}
\newpage
\appendix
\onecolumn

\begin{table}[t!]
\caption{Densely conncted architectures for JSC, MNIST, FashionMNIST and CIFAR-10 datasets.}
  \centering
  \small
  \label{tab:mnist_smallres}
  \begin{tabular}{l c c c c c}
    \toprule
    Dataset & Model & Layers & Neurons/layer & \#Parameters & GroupSum temp \\
    \midrule
    JSC & \textit{sm} & 1 & 50 & 3200 & 1/0.3 \\
    \midrule
    MNIST & small & 6 & 6000 & 48000 & 1/0.1 \\
    \midrule
    FashionMNIST & small & 6 & 6000 & 48000 & 1/0.1 \\
    \midrule
    CIFAR-10 & small & 4 & 12000 & 48000 & 1/0.01 \\
     & medium & 4 & 128000 & 512000 & 1/0.01 \\
    \bottomrule
  \end{tabular}
 \label{tab:model-architecture}
\end{table}

\begin{table}[t!]
\caption{Architecture of the \textbf{CLGN CIFAR-10 Res} model ($k_\text{num}=32, n_\text{bits}=3$).}
  \centering
  \small
  \label{tab:clgn_cifar10_smallres}
  \begin{tabular}{l c c c}
    \toprule
    Layer & Input dimension & Output dimension & Description \\
    \midrule
    ResidualLogicBlock (DLGN) &
    $3n_\text{bits} \times 32 \times 32$ &
    $2k_\text{num} \times 16 \times 16$ &
    Depth $=4$, receptive field $3\times3$, padding 1 \\
    ResidualLogicBlock (WARP) &
    $3n_\text{bits} \times 32 \times 32$ &
    $2k_\text{num} \times 16 \times 16$ &
    Depth $=2$, receptive field $3\times3$, padding 1 \\
    Flatten &
    $2k_\text{num} \times 16 \times 16$ &
    $256 \times 2k_\text{num}$ &
    Spatial flattening of logic feature maps \\
    LogicDense$_1$ &
    $256 \times 2k_\text{num}$ &
    $512k_\text{num}$ &
    Fully connected logic layer \\
    LogicDense$_2$ &
    $512k_\text{num}$ &
    $256k_\text{num}$ &
    Fully connected logic layer \\
    LogicDense$_3$ &
    $256k_\text{num}$ &
    $320k_\text{num}$ &
    Fully connected logic layer \\
    GroupSum &
    $320k_\text{num}$ &
    $10$ &
    Grouped logic aggregation with temperature $\tau$ \\
    \bottomrule
  \end{tabular}
 \label{tab:model-architecture}
\end{table}

\begin{table*}[t!]
\small
\caption{The $16$ Boolean functions of two inputs, sorted lexicographically by their truth tables (ordered as 00, 01, 10, 11), with probabilistic surrogates, WH coefficients $(\gamma_1,\gamma_2,\gamma_3,\gamma_4)$ and indicator coefficients $(\beta_1,\beta_2,\beta_3,\beta_4)$.}
\centering
\renewcommand{\arraystretch}{1.2}
\begin{tabularx}{0.95\textwidth}{c|c|c|c c c c|c|c|c|c}
\toprule
\# & Gate & Formula & 00 & 01 & 10 & 11 & Surrogate & WH & Indicator & Icon\\
\midrule
1  & CONST0   & $0$                 & 0 & 0 & 0 & 0 & $0$ & $(1,\,0,\,0,\,0)$ & $(0,\,0,\,0,\,0)$ &
\adjustbox{valign=c}{\includegraphics[width=1.0cm]{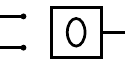}}
\\
2  & AND      & $a \land b$         & 0 & 0 & 0 & 1 & $ab$ & $\left(\tfrac{1}{2},\,\tfrac{1}{2},\,\tfrac{1}{2},\,-\tfrac{1}{2}\right)$ & $(0,\,0,\,0,\,1)$&\adjustbox{valign=c}{\includegraphics[width=1.0cm]{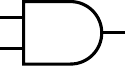}}\\
3  & $a \land \lnot b$ & $a \land \lnot b$   & 0 & 0 & 1 & 0 & $a(1 - b)$ & $\left(\tfrac{1}{2},\,-\tfrac{1}{2},\,\tfrac{1}{2},\,\tfrac{1}{2}\right)$ & $(0,\,0,\,1,\,0)$&\adjustbox{valign=c}{\includegraphics[width=1.0cm]{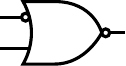}}\\
4 & ID$(a)$    & $a$                 & 0 & 0 & 1 & 1 & $a$ & $(0,\,0,\,1,\,0)$ &$(0,\,0,\,1,\,1)$&\adjustbox{valign=c}{\includegraphics[width=1.0cm]{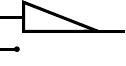}}\\
5  & $\lnot a \land b$ & $\lnot a \land b$   & 0 & 1 & 0 & 0 & $(1 - a)b$ & $\left(\tfrac{1}{2},\,\tfrac{1}{2},\,-\tfrac{1}{2},\,\tfrac{1}{2}\right)$ &$(0,\,1,\,0,\,0)$&\adjustbox{valign=c}{\includegraphics[width=1.0cm]{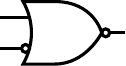}}\\
6 & ID$(b)$    & $b$                 & 0 & 1 & 0 & 1 & $b$ & $(0,\,1,\,0,\,0)$ &$(0,\,1,\,0,\,1)$&\adjustbox{valign=c}{\includegraphics[width=1.0cm]{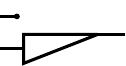}}\\
7  & XOR      & $a \oplus b$        & 0 & 1 & 1 & 0 & $a + b - 2ab$ & $\left(0,\,0,\,0,\,1\right)$ &$(0,\,1,\,1,\,0)$&\adjustbox{valign=c}{\includegraphics[width=1.0cm]{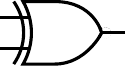}}\\
8  & OR       & $a \lor b$          & 0 & 1 & 1 & 1 & $a + b - ab$ & $\left(-\tfrac{1}{2},\,\tfrac{1}{2},\,\tfrac{1}{2},\,\tfrac{1}{2}\right)$ &$(0,\,1,\,1,\,1)$&\adjustbox{valign=c}{\includegraphics[width=1.0cm]{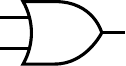}}\\
9  & NOR      & $\lnot(a \lor b)$   & 1 & 0 & 0 & 0 & $1 - a - b + ab$ & $\left(\tfrac{1}{2},\,-\tfrac{1}{2},\,-\tfrac{1}{2},\,-\tfrac{1}{2}\right)$ &$(1,\,0,\,0,\,0)$&\adjustbox{valign=c}{\includegraphics[width=1.0cm]{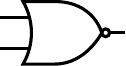}}\\
10  & XNOR     & $\lnot(a \oplus b)$ & 1 & 0 & 0 & 1 & $1 - a - b + 2ab$ & $\left(0,\,0,\,0,\,-1\right)$ &$(1,\,0,\,0,\,1)$&\adjustbox{valign=c}{\includegraphics[width=1.0cm]{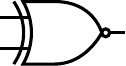}}\\
11 & NOT$(b)$   & $\lnot b$           & 1 & 0 & 1 & 0 & $1 - b$ & $(0,\,-1,\,0,\,0)$& $(1,\,0,\,1,\,0)$&\adjustbox{valign=c}{\includegraphics[width=1.0cm]{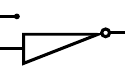}}\\
12 & IMP $(b \to a)$ & $\lnot b \lor a$ & 1 & 0 & 1 & 1 & $1 - b + ab$ & $\left(-\tfrac{1}{2},\,-\tfrac{1}{2},\,\tfrac{1}{2},\,-\tfrac{1}{2}\right)$ & $(1,\,0,\,1,\,1)$&\adjustbox{valign=c}{\includegraphics[width=1.0cm]{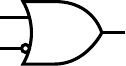}}\\
13 & NOT$(a)$   & $\lnot a$           & 1 & 1 & 0 & 0 & $1 - a$ & $(0,\,0,\,-1,\,0)$ &$(1,\,1,\,0,\,0)$&\adjustbox{valign=c}{\includegraphics[width=1.0cm]{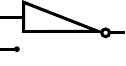}}\\
14 & IMP $(a \to b)$ & $\lnot a \lor b$ & 1 & 1 & 0 & 1 & $1 - a + ab$ & $\left(-\tfrac{1}{2},\,\tfrac{1}{2},\,-\tfrac{1}{2},\,-\tfrac{1}{2}\right)$ &$(1,\,1,\,0,\,1)$&\adjustbox{valign=c}{\includegraphics[width=1.0cm]{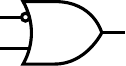}}\\
15  & NAND     & $\lnot(a \land b)$  & 1 & 1 & 1 & 0 & $1 - ab$ & $\left(-\tfrac{1}{2},\,-\tfrac{1}{2},\,-\tfrac{1}{2},\,\tfrac{1}{2}\right)$ &$(1,\,1,\,1,\,0)$&\adjustbox{valign=c}{\includegraphics[width=1.0cm]{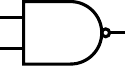}}\\
16  & CONST1   & $1$                 & 1 & 1 & 1 & 1 & $1$ & $(-1,\,0,\,0,\,0)$ &$(1,\,1,\,1,\,1)$&\adjustbox{valign=c}{\includegraphics[width=1.0cm]{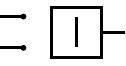}}\\
\bottomrule
\end{tabularx}
\label{tab:binary-gates}
\end{table*}

\section{Generality of $\WARP$}
\label{sec:app_generality}

\begin{proof}
We proof~\cref{theo:generality}.

\paragraph{$\WARP$ is Most Parameter Efficient}

First, we remark that $\basis{\text{W}}{i}$ is an orthonormal basis for all functions $g:\{-1,1\}^n\to\{-1,1\}$
\begin{align*}
    g(\vec s)=\sum_{i\in[2^n]}\theta_i\basis{\text{W}}{i}(\vec s),\quad \left<\basis{\text{W}}{i},\basis{\text{W}}{j}\right>
    =&\sum_{\vec s\in \{-1,1\}^n}\basis{\text{W}}{i}(\vec s)\basis{\text{W}}{j}(\vec s)
    =\sum_{k\in[2^n]}H_{ik}H_{kj}=\delta_{ij}.
\end{align*}
Through orthogonality, it follows that the rows of $\mat H$ are linearly independent which leads to the orthonormal basis argument.
Thus, no other parametrization exists that can represent every $g$ with less parameters.

\paragraph{$\WARP$ is More General Than $\LLNN$}

A basis change w.r.t. $\basis{\mathds 1}{i}$ can be computed by finding a basis transformation matrix $\mat T$ with
\begin{align*}
    T_{ij}=\left<\basis{\mathds 1}{i},\basis{\text{W}}{j}\right>
    =&\sum_{\vec s\in\{-1,1\}^n}\mathds{1}_{\vec b^{(i)}}(\btp^{-1}(\vec s))\basis{\text{W}}{j}(\vec s)
    =\sum_{k\in[2^n]}\delta_{ik}H_{kj}
    =H_{ij}.
\end{align*}
Thus, any $\vec s\in\{-1,1\}^n$ can be transformed between the bases by using the Hadamard matrix $\sum_{ij} H_{ij}\basis{\text{W}}{j}(\vec s)=\basis{\mathds 1}{i}(\vec s)$.
While the $\LLNN$ parametrization restricts their parameters to be in $[0,1]$ through applying the sigmoid function, $\WARP$ allows them to take arbitrary values in $\mathbb R$.
\end{proof}

\paragraph{$\WARP$ is More General Than $\DWN$}

In fact, the $\DWN$ parametrization is similar to $\LLNN$, but uses a straight-through estimator in the forward pass
\begin{align*}
    f_{\DWN}(\vec x)=\btp\left(f_{\LLNN}\left(\btp^{-1}(\sgn(\vec x))\right)\right).
\end{align*}
Hence, it relies on approximate gradients and is less general than $\LLNN$ and through the previous statement, $\DWN$ is also less general than $\WARP$ by means of a non-bijective mapping $\sgn$.

\paragraph{$\WARP$ is Less Redundant and More Parameter-Efficient Than $\DLGN$}

Even though the the functions $\basis{\DLGN}{j}(\vec s)$ are spanning all Boolean functions, they do not form a basis.
$\DLGN$s are parametrized by way more parameters (${2^n}\ll 2^{2^n}$) and a linear projection to the $\LLNN$ basis is given by
\begin{align*}
    \mat P\in \{0,1\}^{2^n\times 2^{2^n}},\ P_{ij}=b^{(j)}_i\in\{0,1\}
    \ \Rightarrow \ \sum_{k\in[2^{2^n}]}P_{ij}\basis{\DLGN}{j}(\vec s)=\basis{\mathds 1}{i}(\vec s),\ \forall i\in[2^n].
\end{align*}
Hence, $\DLGN$s introduce a lot of redundancy.

\section{Parametrization Discretization}
\label{sec:app_discretization}

\begin{proof}
    We proof~\cref{theo:discretization}.
    The discretization error in the $p$-norm is given as
    \begin{align*}
        \left\|\vec f_{\WARP} -\vec t\right\|_p=\left(\sum_i\left\vert f(\vec b^{(i)})-t_i\right\vert^p\right)^{\frac{1}{p}}.
    \end{align*}
    Finding an optimal discretization strategy $\vec t\in\{0,1\}^{2^n}$ leads to
    \begin{align*}
         \argmin_{\vec t\in\{0,1\}^{2^n}}\left\|\vec f_{\WARP} -\vec t\right\|_p
         \ \Leftrightarrow \ \argmin_{t_i\in\{0,1\}}\left\vert f(\vec b^{(i)})-t_i\right\vert,
    \end{align*}
    for all $i\in[2^n]$ and with
    \begin{align*}
    \lut{f_{\WARP}}_i=\argmin_{t\in\{0,1\}}\left\vert f(\vec b^{(i)})-t\right\vert 
    =\argmin_{t\in\{0,1\}}\left\vert \sigma\left(\sum\nolimits_j H_{ij}\theta_j\right)-t\right\vert =\left(1+\sgn\left(\sum\nolimits_jH_{ij}\theta_j\right)\right)/2
    \end{align*}
    we obtain~\cref{eq:discretization}.
    For a vector $\vec s\in\{-1,1\}^{2^n}$, the WH parameters are obtained by $(1/\sqrt{2^n})\mat H\vec s$.
    Looking at the parameter vector $\vec\theta\in\mathbb R^n$
    \begin{align*}
    \argmin_{\vec t\in\{-1,1\}^n}\left\| \vec\theta-\frac{1}{\sqrt{2^n}}\mat H\vec t\right\|_2 
    &=\argmin_{\vec t\in\{-1,1\}^n}\left\|\mat H \vec\theta-\frac{1}{\sqrt{2^n}}\vec t\right\|_2 
    \Leftrightarrow
    \argmin_{t\in\{-1,1\}}\left|\sum\nolimits_jH_{ij}\theta_j-\frac{1}{\sqrt{2^n}}t\right|=\sgn\left(\sum\nolimits_jH_{ij}\theta_j\right)\ \forall i\in[2^n],
\end{align*}
where we used the norm-preserving property of the unitary matrix $\mat H$.
\end{proof}

\section{Parameter Initialization}
\label{sec:app_initialization}

\begin{proof}
We proof the validity of the residual intialization in~\cref{eq:initialization}.
The LUT of the pass-through function $f(\vec x)=x_n$ is given by $\lut{f}_i=0$ if $i<\ell$ and $\lut{f}_i=1$ else with $\ell=2^{n-1}+1$.
Thus, aiming for the $\WARP$ parametrization to represent $f$ with probability $p$ for every LUT entry leads to
\begin{align*}
    \mathbb P(\lut{f_{\WARP}}_i=\lut{f}_i)=p,\ \forall i
    &\Leftrightarrow f_{\WARP}(\vec b^{(i)})=1-p,\ i<\ell\wedge  f_{\WARP}(\vec b^{(i)})=p, \ i\ge \ell \\
    &\Leftrightarrow \sigma\left(\frac{\sqrt{2^n}}{\tau}\sum_jH_{ij}\theta_j\right)=1-p,\ i<\ell\wedge \sigma\left(\frac{\sqrt{2^n}}{\tau}\sum_jH_{ij}\theta_j\right)=p, \ i\ge \ell \\
    &\Leftrightarrow \sum_jH_{ij}\theta_j=-\frac{\tau}{\sqrt{2^n}} \sigma^{-1}(p),\ i<\ell\wedge \sum_jH_{ij}\theta_j=\frac{\tau}{\sqrt{2^n}} \sigma^{-1}(p), \ i\ge \ell,
\end{align*}
with $\sigma^{-1}(1-p)=-\sigma^{-1}(p)$.
Due to the recursive structure of the Hadamard matrix
\begin{align*}
    \mat H^{(n+1)}=
    \frac{1}{\sqrt{2}}
    \begin{pmatrix}
        \mat H^{(n)} & \mat H^{(n)} \\
        \mat H^{(n)} & -\mat H^{(n)}
    \end{pmatrix}, \quad
    \mat H^{(1)}=\frac{1}{\sqrt 2}
    \begin{pmatrix}
        1 & 1 \\
        1 & -1
    \end{pmatrix},
\end{align*}
combining the $2^n$ different linear equations leads to 
\begin{align*}
    \sum_{i<\ell}\sum_jH_{ij}\theta_j&=\frac{2^{n-1}}{\sqrt{2^n}}\left(\theta_1+\theta_{\ell}\right)=-\frac{\tau}{\sqrt{2^n}}\sigma^{-1}(p)2^{n-1}, \\
    \sum_{i\ge\ell}\sum_jH_{ij}\theta_j&=\frac{2^{n-1}}{\sqrt{2^n}}\left(\theta_1-\theta_{\ell}\right)=\frac{\tau}{\sqrt{2^n}} \sigma^{-1}(p)2^{n-1}.
\end{align*}
Subtracting these two equations, we obtain $\theta_{\ell} =\tau \sigma^{-1}(1-p)$ and $\theta_1=0$.
Similarly, it follows that $\theta_i=0$, $\forall i\in [2^n]\setminus\{\ell\}$.

\end{proof}

\paragraph{Gradient Stability}

The entries of the gradient w.r.t. to the input is computed as
\begin{align*}
    \frac{\partial f_{\WARP}(\vec x)}{\partial x_i}=\frac{1}{\tau}\sigma'\left(\frac{1}{\tau}\sum_{i\in[2^n]}\theta_i\basis{\text{W}}{i}(\btp(\vec x))\right)
    \cdot \left(\sum_{i\in[2^n]}\theta_i\frac{\partial}{\partial x_i}\basis{\text{W}}{i}(\btp(\vec x))\right).
\end{align*}
The gradient vanishes if one of the two factors is close to $0$.
For the first factor, we get
\begin{align*}
    \sigma'\left(\frac{1}{\tau}\sum_{i\in[2^n]}\theta_i\basis{\text{W}}{i}(\btp(\vec x))\right)
    =\sigma\left(\frac{1}{\tau}\sum_{i\in[2^n]}\theta_i\basis{\text{W}}{i}(\btp(\vec x))\right)
    \cdot \sigma\left(1-\frac{1}{\tau}\sum_{i\in[2^n]}\theta_i\basis{\text{W}}{i}(\btp(\vec x))\right),
\end{align*}
which is close to $0$ if the argument is $\ll 0$ or $\gg 0$.
For our RI, we have
\begin{align*}
    \sum_{i\in[2^n]}\theta_i\basis{\text{W}}{i}(\btp(\vec x))=\theta_{\ell}\basis{\text{W}}{\ell}(\btp(\vec x))
    =\theta_{\ell}\prod_{k\in[n]}\btp(x_k)^{b_k^{(\ell)}}
    =\theta_{\ell}\btp(x_{\ell}),
\end{align*}
which can be well controlled if $\theta_l$ is not $\ll 0$ or $\gg 0$.

The second factor evaluates to 
\begin{align*}
    \sum_{i\in[2^n]}\theta_i\frac{\partial}{\partial x_i}\basis{\text{W}}{i}(\btp(\vec x))=\theta_{\ell}\frac{\partial}{\partial x_{\ell}}\btp(x_{\ell})=-2\theta_{\ell},
\end{align*}
which is also well controlled if $\theta_l$ is $< 0$ or $> 0$.
Our proposed RI intialization thus prevents gradient vanishing, compared to random initialization~\cite{ruttgers2025light}.
\paragraph{Intercomparability to $\DLGN$}

$\DLGN$s use RI, introducing skip connections by increasing probability of sampling identity gates.
Introducing a parameter $c\ge 0$ controlling this increase, we obtain
\begin{align*}
    \mathbb P(\lut{f_{\DLGN}}_i=1)=f_{\DLGN}(\vec b^{(i)})
    &=\sum_j\alpha_j\basis{\DLGN}{j}(\vec b^{(i)}) \\
    &=\frac{1}{e^c+2^{2^n}-1}\sum_j\lut{j}_ic_j',\quad 
    c_j'=\begin{cases}
        e^c, & \text{if }j=2^{2^{n-1}}-1, \\
        1, & \text{else.}
        \quad 
    \end{cases}\\
    &=\frac{1}{e^c+2^{2^n}-1}\left(\lut{j'}_ie^c+\sum_{j\neq j'}\lut{j}_i\right) \\
    &=\frac{1}{e^c+2^{2^n}-1}\left(\delta_{i>2^{n-1}}e^c+2^{2^n-1}-\delta_{i>2^{n-1}}\right).
\end{align*}
To obain intercomparability between methods, we want $\mathbb P(\lut{f_{\DLGN}}_i=\lut{f}_i)=\mathbb P(\lut{f_{\WARP}}_i=\lut{f}_i)=p$.
Thus,
\begin{align*}
    \theta_{\ell}=\tau \sigma^{-1}(1-p)=(2^n-1)\log(2)-\log\left(e^c+2^{2^n-1}-1\right).
\end{align*}
For obtaining the corresponding value of $c$ representing a certain probability $p$, we compute 
\begin{align*}
    \frac{1}{e^c+2^{2^n}-1}\left(2^{2^n-1}\right)
    &=1-p \\
    \Leftrightarrow e^c+2^{2^n}-1&=\frac{2^{2^n-1}}{1-p} \\
    \Leftrightarrow c&=\left(\frac{2^{2^n-1}}{1-p}-2^{2^n}+1\right).
\end{align*}

\section{Training Dynamics on MNIST and Fashion-MNIST}
\label{sec:app_fmnist}

We run a similar experiment to the one conducted in~\cref{sec:dynamics}, but on different datasets.
That is, we plot the discrete validation accuracy and the discretization gap for LUT sizes $n=2,4,6$ on MNIST and FashionMNIST, varying the parametrization with using Gumbel noise injection and STEs.
For both datasets, we utilize the small model from~\cite{petersen2022deep} and the plots are given in~\cref{fig:lutrank_gumbel_fmnist,fig:lutrank_gumbel_mnist}.
\begin{figure*}
    \centering
    \includegraphics[width=\textwidth]{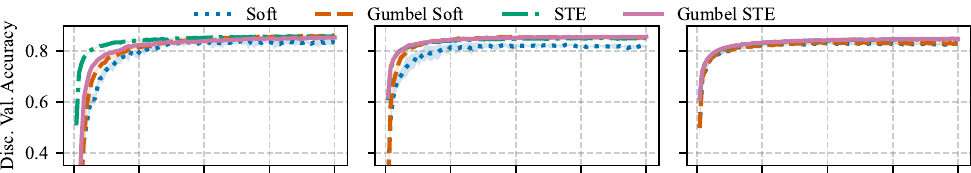}
    \includegraphics[width=\textwidth]{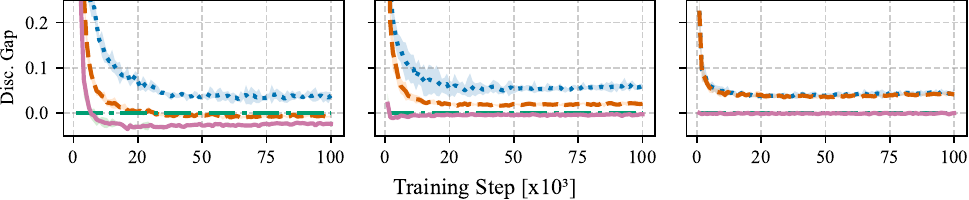}
    \caption{Discrete validation accuracy (top) and discretization gap (bottom) for $\WARP$ on FashionMNIST comparing different parametrization methods and varying LUT sizes, $n=2$ (left), $n=4$ (middle) and $n=6$ (right).}
    \label{fig:lutrank_gumbel_fmnist}
\end{figure*}
\begin{figure*}
    \centering
    \includegraphics[width=\textwidth]{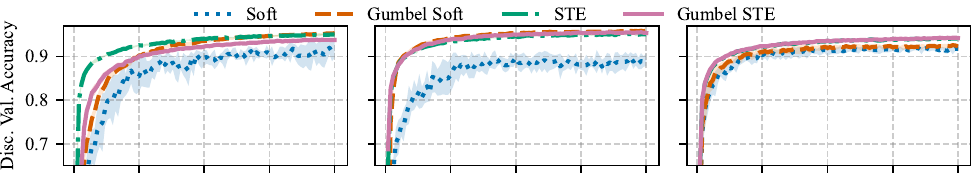}
    \includegraphics[width=\textwidth]{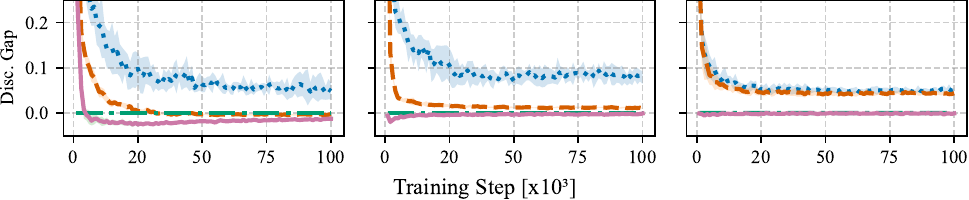}
    \caption{Discrete validation accuracy (top) and discretization gap (bottom) for $\WARP$ on MNIST comparing different parametrization methods and varying LUT sizes, $n=2$ (left), $n=4$ (middle) and $n=6$ (right).}
    \label{fig:lutrank_gumbel_mnist}
\end{figure*}
We scale the temperature parameter $\tau$ proportional to the number of parameters to accomodate for the number of summands, that is $\tau=1$ for $n=2$, $\tau=4$ for $n=4$, and $\tau=16$ for $n=6$.

\begin{table}[t!]
\centering
\caption{Hyperparameters used in the thresholding and high-input convolution experiments. Parameters not listed were adopted from the original literature of the respective model architectures.}
\label{tab:hyperparameters}
\begin{tabular}{l l l}
\toprule
\textbf{Experiment} & \textbf{Hyperparameter} & \textbf{Value} \\
\midrule
\multirow{8}{*}{\textbf{Learnable Thresholding (JSC)}} 
 & Batch size & 128 \\
 & Forward sampling & Gumbel-Soft \\
 & Learning rate & 0.002 \\
 & WARP sigmoid temperature & 1.0 \\
 & Binarization learning rate & $1 / n_{\text{bits}}$ (relative) \\
 & Binarization temperature & 0.001 \\
 & Binarization temperature (Softplus) & 0.001 \\
 & Residual initialization probability & 0.95 \\
\midrule
\multirow{3}{*}{\textbf{Learnable Thresholding (CIFAR-10)}} 
 & Binarization learning rate & 0.02 \\
 & Binarization temperature & 0.0002 \\
 & Binarization temperature (Softplus) & 0.01 \\
\midrule
\multirow{3}{*}{\textbf{High-Input Convolutions}} 
 & Batch size & 128 \\
 & Parametrization temperature & 1.0 \\
 & Residual initialization probability & 0.95 \\
\bottomrule
\end{tabular}
\label{tab:hyperparameters}
\end{table}


\end{document}